\newcommand{\probP}{\text{I\kern-0.15em P}}
\newtheorem{theorem}{Theorem}
\newtheorem{assumption}{Assumption}
\newtheorem{corollary}{Corollary}[theorem] 
\title{Non-Normalized Solutions of Generalized Nash Equilibria in Dynamic Games With Shared Constraints}
\author{Mark Pustilnik$^{1}$, Francesco Borrelli$^{1}$
\thanks{$^{1}$University of California at Berkeley, USA \quad {\tt\small \{pkmark, fborrelli\}@berkeley.edu}}%
}
\begin{document}

\maketitle

\begin{abstract}

In dynamic games with shared constraints, Generalized Nash Equilibria (GNE) are often computed using the normalized solution concept, which assumes identical Lagrange multipliers for shared constraints across all players. While widely used, this approach excludes other potentially valuable GNE. This paper presents a simple and novel method that can utilize the Mixed Complementarity Problem (MCP) formulation to compute non-normalized GNE. The proposed approach allows to expand the solution space of GNE. We also propose a systematic approach for selecting one GNE based on a predefined optimality criteria, enhancing practical flexibility. Numerical examples from past literature and a car racing example illustrate the method’s effectiveness, offering an alternative to traditional normalized solutions. 

\end{abstract}

\begin{IEEEkeywords}
generalized Nash equilibrium, mixed complementarity problem, dynamic games, shared constraints
\end{IEEEkeywords}


\section{Introduction}
Computing a Nash Equilibrium (NE) in a Dynamic Game (DG) is a well-established concept. In DGs, players often share constraints, making each player's feasible strategy set dependent on others’ strategies. This extends the problem to a Generalized Nash Equilibrium Problem (GNEP). NE serves as a fundamental tool for analyzing competitive behavior across various domains, including economics \cite{arrow2024existence} and competitive sports like racing \cite{zhu2023sequential, spica2020real}.

In general, the existence or uniqueness of a GNE is not guaranteed.
We are interested in problems where GNEPs admit multiple solutions. The standard approach for solving GNEPs is to derive and combine the Karush-Kuhn-Tucker (KKT) conditions for each player’s optimization problem~\cite{dreves2011solution}. When shared constraints exist, the Lagrange multipliers associated with these constraints are typically assumed to be identical for all players. This concept, known as the ``normalized solution", was introduced by Rosen \cite{rosen1965existence}, who demonstrated its existence and uniqueness under specific conditions. This solution also arises from reformulating the DG as a Quasi Variational Inequality (QVI) problem. Harker \cite{harker1991generalized} and others \cite{facchinei2007generalized} further showed that this reformulation reduces the solution set to the normalized solution, provided it exists. The normalized solution is widely adopted in various applications, including racing \cite{zhu2023sequential, liu2023learning, spica2020real}, economics \cite{bacsar1998dynamic}, and traffic systems \cite{migot2019revisiting}, and has become the standard approach for solving DGs due to its computational, numerical, and conceptual advantages. Despite these benefits, the normalized solution may not always capture the complexity of real-world interactions, where other GNEs could better represent asymmetric or more intuitive behavior. However, computing non-normalized GNEs is challenging. Standard approaches to solving GNEPs, such as reformulating the KKT conditions as a QVI is inherently bias toward normalized solutions \cite{facchinei2007generalized}. Reformulating the problem as Mixed Complementarity Problem (MCP) is practically also biased toward the normalized solution by many numerical solvers even when no such constraint is imposed on the solution space. The solution by the iterative best response \cite{spica2020real,wang2021game} may result in a non-normalized solution but any solution achieved by this method is heavily dependent on the initial guess and convergence is not guaranteed. The work done by Nabetani et al \cite{nabetani2011parametrized} suggest a method to calculate Non-Normalized solution by reformulating the QVI in a way that de-normalizes the solution. The resulting method requires solving a large number of VI instances - Each instance may or may not correspond to a valid GNE, even if the VI solver converges. For linear generalized Nash equilibrium problems, Dreves \cite{dreves2017computing} developed a duality-based algorithm that computes the entire solution set, not only normalized equilibria, and proved finite termination.

This paper addresses the limitations of existing GNEP solution methods through three key contributions:

\begin{enumerate}
    \item \textbf{A novel MCP-based framework for non-normalized equilibria:} We extend the Mixed Complementarity Problem (MCP) formulation to systematically compute \textit{non-normalized} Generalized Nash Equilibria (GNE), thereby uncovering solutions that are typically excluded by standard formulations.

    \item \textbf{A  GNE selection mechanism:} We propose a systematic bi-level optimization approach for selecting an ``optimal'' equilibrium from the expanded set of non-normalized GNEs. Crucially, we formally show that the introduced scaling factors  act as well-defined \textit{tuning knobs} for navigating and parameterizing the entire GNE solution set.

    \item \textbf{Empirical validation:} We demonstrate both analytically and numerically how normalized solutions can lead to unrealistic or counterintuitive outcomes, and we show that the proposed framework not only overcomes these issues but also yields richer, more intuitive, and practically relevant equilibria.
\end{enumerate}

The paper is organized as follows. Section~\ref{sec:Problem} introduces the problem formulation and a motivating example. Section~\ref{sec:Normalized} reviews the classical normalized GNE approach and its limitations. Section~\ref{sec:Non_Normalized} presents the proposed MCP-based method for computing non-normalized GNEs and the theoretical results that establish the scaling matrices as formal tuning parameters. Section~\ref{sec:examples} provides analytical validation, while Section~\ref{sec:selection} introduces the bi-level optimization for equilibrium selection. Section~\ref{sec:results} presents simulation results on dynamic racing games, and Section~\ref{sec:conclusion} concludes the paper.


\section{Problem Formulation} \label{sec:Problem}
Given $M$ players, each player $i \in \{1,...,M\}$ controls the variables $x^i \in \mathbb{R}^{n_i}$. 
The vector $x \in \mathbb{R}^n$ is formed by concatenating all the players' decision variables:

\begin{equation} \label{x_vec}
    x := [(x^1)^T, ..., (x^M)^T]^T
\end{equation}
where $n := \sum^M_{i=1}n_i$. $x^{-i}$ represent the decision variables of all players except player $i$. To emphasize the decision variables of player $i$ within $x$ we write $(x^i,x^{-i})$ instead of $x$. The aim of player $i$ is to choose the $x^i$ which minimizes its own cost function $J_i(x^i,x^{-i})$ refer to such decision as ``strategy".
The feasible set of the $i$-th decision depends on the strategies of the other players:
\begin{equation} \label{game_def}
    \min_{x^i} {J_i(x^i,x^{-i})} \quad \text{subject to} \quad x^i \in \mathcal{X}_i(x^{-i}) \subseteq \mathbb{R}^{n_i}
\end{equation}
The feasible set of each player can be represented in the following form:
\begin{equation} \label{feasible_set}
    \begin{split}
    \mathcal{X}_i(x_{-i}) := \{x_i \in \mathbb{R}^{n_i} | h_i(x_i) &=   0 \\ ,g_i(x_i) &\leq 0,~ s(x_i,x_{-i}) \leq 0\}
    \end{split}
\end{equation}
where, $h_i: \mathbb{R}^{n_i} \to \mathbb{R}^{k_i}$ defines private equality constraints that depend only on player $i$'s strategy, $g_i:\mathbb{R}^{n_i} \rightarrow \mathbb{R}^{m_i}$ defines the private inequality constraints of player $i$ that depends only on player $i$'s strategy. $s:\mathbb{R}^{n} \rightarrow \mathbb{R}^{m_0}$ defines the shared constraints that depends on the strategies of all players and shared by all players. The solution set of problem (\ref{game_def}) for player $i$ is denoted by $\mathcal{S}_i(\bar{x}^{-i})$. The Generalized Nash Equilibrium Problem (GNEP) is the problem of finding vector $\bar{x}$ such that:
\begin{equation} \label{game_sol}
    \bar{x}^i \in \mathcal{S}_i(\bar{x}^{-i}) \quad \forall i \in \{1,...,M\}
\end{equation}
A solution to the GNEP is the GNE. A GNE is the set of strategies that characterized by the fact that none of the players can improve unilaterally its cost function by changing its strategy in a feasible direction. Furthermore, it is typical to assume the following assumptions (Rosen's setting \cite{rosen1965existence}):
\begin{assumption}
\label{ass1}
\begin{enumerate}[label=(\roman*)] \label{assm}
    \item set $\mathcal{X} \subset \mathbb{R}^n$ is convex and compact.
    \item Function $g_i: \mathbb{R}^n \rightarrow \mathbb{R}, i=1,...,m$ and $s: \mathbb{R}^n \rightarrow \mathbb{R}^{m_0}$ are convex and continuously differentiable.
    \item The cost function of every player $J_i(x^i,x^{-i})$ is continuously differentiable in $x \in \mathcal{X}$.
    \item The cost function of every player $J_i(x^i,x^{-i})$ is pseudo-convex in $x^i$ for every given $x^{-i}$.
\end{enumerate}
\end{assumption}

Define the Lagrangian function for each player as:
\begin{equation} \label{Lagrangian}
    \mathcal{L}_i := J_i(x^i,x^{-i})+ \mu_i^Th_i(x^i) + \lambda_i^T \cdot g_i(x^i) + \sigma_i^T \cdot s(x^i,x^{-i})
\end{equation}
where, $\mu_i \in \mathbb{R}^{k_i}$ are the Lagrange multipliers of the equality constraints of player $i$, $\lambda_i \in \mathbb{R}_+^{m_i}$ and $\sigma_i \in \mathbb{R}_+^{m_0}$ are the Lagrange multipliers of the private and shared inequality constraints of player $i$ respectively. For a point $x \in \mathcal{X}$ to be a GNE, the following KKT conditions have to be satisfied:
\begin{equation} \label{KKT}
\begin{split}
    & \nabla_{x^i}\mathcal{L}_i =0, \quad \forall i =1,...,M \\
    & 0 \leq \lambda_i \perp g_i \leq 0, \quad \forall i =1,...,M \\
    & 0 \leq \sigma_i \perp s \leq 0, \quad \forall i =1,...,M \\
    & \lambda_i \geq 0, \quad \forall i =1,...,M \\
    & \sigma_i \geq 0, \quad \forall i =1,...,M \\
    & h_i = 0, \quad \forall i =1,...,M \\
\end{split}
\end{equation}
The following theorem summarizes known results on computing GNE as a solution to a GNEP.
\begin{theorem} \cite{dreves2011solution, bueno2019optimality, facchinei2010generalized} \label{thm0} Consider the game~(\ref{game_def})-(\ref{feasible_set}), assume  Assumption~\ref{ass1}  and some suitable Constraint Qualification  (e.g., Slater Conditions) hold, then \begin{enumerate} 
\item If the tuple $(\bar{x}, \{\bar{\mu}_i\}_{i=1}^M, \{\bar{\lambda}_i\}_{i=1}^M,\{\bar{\sigma}_i\}_{i=1}^M)$ satisfies the KKT conditions (\ref{KKT}), then $\bar{x}$ is a solution of the GNEP. 
\item If $\bar{x}$ is a solution of the GNEP, then there exists a suitable vectors of multipliers $(\{\bar{\mu}_i\}_{i=1}^M, \{\bar{\lambda}_i\}_{i=1}^M,\{\bar{\sigma}_i\}_{i=1}^M)$ that satisfy the KKT conditions (\ref{KKT}) when $x=\bar{x}$ 
\item If $\bar{x}$ is a solution of the GNEP and LICQ holds, then there exists a \emph{unique} vectors of multipliers $(\{\bar{\mu}_i\}_{i=1}^M, \{\bar{\lambda}_i\}_{i=1}^M,\{\bar{\sigma}_i\}_{i=1}^M)$ that satisfy the KKT conditions (\ref{KKT}) when $x=\bar{x}$ \end{enumerate} \end{theorem}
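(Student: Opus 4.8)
The plan is to prove the three parts by the standard device of decomposing the GNEP into the $M$ parametric single-player problems~(\ref{game_def}) — freezing $x^{-i}=\bar{x}^{-i}$ — and then invoking classical nonlinear-programming facts about each frozen problem: sufficiency of the KKT conditions under (pseudo-)convexity, necessity of the KKT conditions under a constraint qualification, and uniqueness of multipliers under LICQ. The only genuinely game-theoretic observation is that, when the KKT system is written player by player, each player $i$ carries its own multiplier $\sigma_i$ for the shared constraint $s$, so stacking the $M$ individual systems reproduces exactly~(\ref{KKT}) with no coupling among the $\sigma_i$; hence the ``normalized'' requirement $\sigma_i=\sigma_j$ is an additional restriction, not a consequence.

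For part~1, fix $i$ and set $x^{-i}=\bar{x}^{-i}$. By Assumption~\ref{ass1}(ii)--(iii) the set $\mathcal{X}_i(\bar{x}^{-i})$ is convex (the map $s(\cdot,\bar{x}^{-i})$ is convex in $x^i$, $g_i$ is convex, $h_i$ is affine) and $J_i(\cdot,\bar{x}^{-i})$ is pseudo-convex by Assumption~\ref{ass1}(iv). The part of~(\ref{KKT}) indexed by $i$ is precisely the KKT system of this frozen problem at $\bar{x}^i$. By the generalized sufficiency theorem for a pseudo-convex objective with quasi-convex active constraints, $\nabla_{x^i}\mathcal{L}_i=0$ together with primal feasibility and complementarity implies $\bar{x}^i$ globally minimizes $J_i(\cdot,\bar{x}^{-i})$ over $\mathcal{X}_i(\bar{x}^{-i})$, i.e. $\bar{x}^i\in\mathcal{S}_i(\bar{x}^{-i})$. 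Since $i$ was arbitrary, $\bar{x}$ satisfies~(\ref{game_sol}).

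For part~2, again fix $i$ and freeze $x^{-i}=\bar{x}^{-i}$; since $\bar{x}$ is a GNE, $\bar{x}^i$ minimizes $J_i(\cdot,\bar{x}^{-i})$ over $\mathcal{X}_i(\bar{x}^{-i})$, and under the assumed constraint qualification (Slater, available because the constraints are convex and a strictly feasible point exists for the frozen problem) the KKT conditions are necessary, yielding $\bar{\mu}_i,\bar{\lambda}_i\ge 0,\bar{\sigma}_i\ge 0$ that satisfy the player-$i$ relations at $\bar{x}^i$; collecting the $M$ tuples gives~(\ref{KKT}). For part~3, suppose LICQ holds at $\bar{x}^i$ for each frozen problem and that two multiplier tuples both satisfy~(\ref{KKT}) at $\bar{x}$; complementarity kills the multipliers of inactive inequalities in both, and subtracting the two stationarity equations $\nabla_{x^i}\mathcal{L}_i=0$ leaves a vanishing linear combination of the active-constraint gradients with coefficients equal to the componentwise differences of the tuples, so LICQ forces these differences to zero. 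Doing this for every $i$ gives uniqueness.

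I expect the main difficulty to be expository rather than mathematical: stating precisely which regularity is used where — Slater for necessity in part~2, LICQ for multiplier uniqueness in part~3, and pseudo-convex objective with quasi-convex active constraints (plus affine $h_i$) for sufficiency in part~1 — since Assumption~\ref{ass1} as written only asserts convexity/pseudo-convexity and differentiability and does not explicitly state that the $h_i$ are affine. Once these hypotheses are pinned down, each part is a routine reduction to a textbook statement about the single parametric program of a fixed player, and the result follows by ranging over $i=1,\dots,M$.
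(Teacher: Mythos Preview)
Your proposal is correct and follows the standard argument. Note, however, that the paper does not supply its own proof of Theorem~\ref{thm0}: it is stated as a summary of known results, with citations to \cite{dreves2011solution, bueno2019optimality, facchinei2010generalized}, and is used as a black box in the proofs of Theorem~\ref{thm1} and Corollary~\ref{cor:num1}. So there is no in-paper proof to compare against; your decomposition into $M$ frozen single-player programs and appeal to (i) KKT sufficiency under pseudo-convex objective with convex constraints, (ii) KKT necessity under a Slater-type CQ, and (iii) multiplier uniqueness under LICQ is exactly the route the cited references take, and your caveat about $h_i$ needing to be affine is apt.
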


\subsection{GNE Modeling of a Three Car Racing Problem} \label{exm_2cars}
To illustrate the formulation presented in Section \ref{sec:Problem}, we consider a one-step, finite-horizon, discrete-time, general-sum, open-loop dynamic game in a racing scenario involving three cars on a two-lane track. Each car is a player in the generalized Nash equilibrium problem (GNEP), which controls its velocity along a one-dimensional lane. Car 1 occupies the first lane, while Cars 2 and 3 share the second lane (Figure~\ref{fig:race}). $x_i$ and $v_i$ represent the final position and velocity of the car $i$, respectively. The game for player $i \in \{1,2,3\}$:

\begin{equation} \label{exm:setup}
\begin{alignedat}{3}
    &\min_{x_i, v_i} J_i(x_i,v_i,x^{-i}) \\
    &\text{s.t.} \\
    &x_i = x_i(0) + v_i\cdot \Delta t  \\
    &x_2 \leq x_3 \\
    &x_1(0)=0,x_2(0)=0.5,x_3(0)=0.75
\end{alignedat}
\end{equation}

where $\Delta t = 1[sec]$. The cost functions of the players are:
\begin{equation} \label{exm:cost}
\begin{alignedat}{3}
    &J_1 = -x_1 + x_2 + \frac{1}{2}v_1^2 \\
    &J_2 = -x_2 + x_1 + \frac{1}{2}v_2^2 \\
    &J_3 = -x_1 + x_2 + \frac{1}{2}v_3^2 
\end{alignedat}
\end{equation} 
The objective of Car 1 is to advance farther than Car 2 while minimizing control effort. Similarly, Car 2 aims to advance farther than Car 1 while minimizing its control effort. Car 3, however, seeks to assist Car 1 in winning the race while keeping its control effort minimal. A collision avoidance constraint between Cars 2 and 3 makes it possible for car 3 to affect car 2 position. Figure \ref{fig:race} illustrates the initial conditions of the race. Throughout this paper, we will use this example to provide insights into the proposed methods.
\begin{figure} 
\centering 
\includegraphics[scale=0.45]{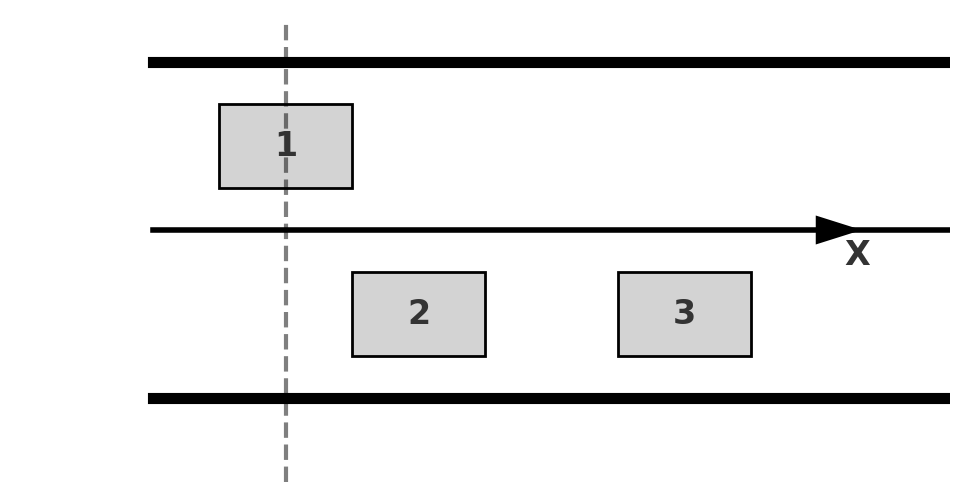} 
\caption{Illustration of the 1D racing example.} 
\label{fig:race} 
\end{figure}


\section{The Normalized Solution For Dynamic Games} \label{sec:Normalized}
Rosen introduced the concept of the ``Normalized" solution for the Generalized Nash Equilibrium Problem (GNEP) in \cite{rosen1965existence}. Under the previously stated assumptions, the normalized solution corresponds to a scenario where the Lagrange multipliers associated with the shared constraints are identical for all players:
\begin{equation} \label{shared_sigma}
    \sigma^1 = \dots = \sigma^M = \sigma.
\end{equation}
Rosen showed that a unique normalized solution exists under the specified conditions. Subsequent works, including \cite{harker1991generalized,facchinei2007generalized}, showed that reformulating the GNEP as a Quasi-Variational Inequality (QVI) problem inherently reduces the solution set to the normalized solution, see \cite{facchinei2007generalized} section 3. \textit{As a result, the normalized solution has become the de facto standard in solving GNEPs, to the extent that many papers do not explicitly mention this assumption}. One popular method for solving GNEPs is to reformulate the Karush-Kuhn-Tucker (KKT) conditions as a Mixed Complementarity Problem (MCP). An MCP is defined for a function $F(z): \mathbb{R}^d \to \mathbb{R}^d$, with lower and upper bounds $l \in \mathbb{R}^d$ and $u \in \mathbb{R}^d$, respectively. The goal is to find a vector $z^* \in \mathbb{R}^d$ such that the following conditions hold for each element $j \in \{1, \dots, d\}$:
\begin{equation} \label{MCP}
\begin{split}
    z^*_j = l_j &\iff F_j(z^*) \geq 0, \\
    l_j < z^*_j < u_j &\iff F_j(z^*) = 0, \\
    z^*_j = u_j &\iff F_j(z^*) \leq 0.
\end{split}
\end{equation}

The KKT conditions of the GNEP can be reformulated as an MCP by defining the following $F$, $l$, and $u$:
\begin{equation}
\begin{split}
F(z) &= 
[
\nabla^T_{x^1}\mathcal{L}_1, \cdots, \nabla^T_{x^M}\mathcal{L}_M ,h^T_1,\cdots,h^T_M, \cdots \\ &\qquad \qquad \qquad \cdots,-g^T_1,\cdots,-g^T_M,-s^T,\cdots,-s^T
]^T\\
l &= 
[-\infty,\cdots,-\infty,-\infty,\cdots,-\infty,0,\cdots,0,0,\cdots,0
]^T \\
u &= 
[\infty, \cdots, \infty,\infty ,\cdots ,\infty ,\infty ,\cdots ,\infty ,\infty ,\cdots ,\infty]^T
\label{eq:full_MCP}
\end{split}
\end{equation}
where $\bar{z} = [x^1, \dots, x^M, \mu_1, \dots, \mu_M, \lambda_1, \dots, \lambda_M, \sigma_1, \dots, \sigma_M]$ represents the vector of all decision variables and Lagrange multipliers, and the shared constraint $s$ is duplicated $N$ times. Solving the MCP above yields the normalized solution, as shown in \cite{facchinei2007generalized}. To simplify the formulation, the normalized solution assumption reduces the MCP by enforcing shared multipliers resulting in:
\begin{equation}
\begin{split}
F(z) &= 
[
\nabla^T_{x^1}\mathcal{L}_1, \cdots, \nabla^T_{x^M}\mathcal{L}_M ,h^T_1,\cdots,h^T_M, \cdots \\ &\qquad \qquad\qquad \qquad \cdots,-g^T_1,\cdots,-g^T_M,-s^T
]^T\\
l &= 
[-\infty,\cdots,-\infty,-\infty,\cdots,-\infty,0,\cdots,0,0]^T \\
u &= 
[\infty, \cdots, \infty,\infty ,\cdots ,\infty ,\infty ,\cdots ,\infty ,\infty ]^T
\label{eq:normalized_MCP}
\end{split}
\end{equation}
where $z = [x^1, \dots, x^M, \mu_1, \dots, \mu_M, \lambda_1, \dots, \lambda_M, \sigma]$.

Numerical solvers, such as PATH \cite{dirkse1995path}, are widely used to solve MCPs efficiently and are considered state-of-the-art tools for this purpose. The normalized solution remains a computationally efficient and conceptually straightforward approach for solving GNEPs, despite its limitations in representing more complex dynamics.

\subsection{Limitations of Normalized GNE solutions}

As described above, the normalized solution is usually just one of many possible solutions. In some cases, this solution may not provide a reasonable or intuitive solution. To illustrate this, we solve the problem presented in \ref{exm_2cars} using the normalized solution assumption. Assuming that the Lagrange multipliers of the shared constraints are identical for all players, the solution of the GNEP gives the following GNE:

\begin{equation} \label{norm_exp:sol1}
\begin{split}
    x_1 &= x_1(0)+\Delta t^2 =1.0\\
    x_2 = x_3=\frac{1}{2}&(x_2(0) + x_3(0)+\Delta t^2) =1.125
\end{split}
\end{equation}
The solution obtained from the normalized approach is counterintuitive. In the normalized solution, Car 3 does not block Car 2 as expected. Instead, it moves forward, worsening its own cost function. If Car 3 had remained stationary, it would have achieved a lower cost. The normalized GNE constrained by the fact that $\sigma$ is shared by all players gives a solution the car 3 would not have chosen under any other reasonable condition. It can be easily verified that any solution of the form $x_2=x_3 \in [0.75,1.125]$ is a GNE. Therefore, the normalized solution is merely one possible option for Car 3, and it is not the most appealing for team Car 1 and Car 2.


\section{Non-Normalized Solution For Dynamic Games} \label{sec:Non_Normalized}
The non-normalized solution of a GNEP is a solution $x \in \mathcal{X}$ such that the KKT conditions hold and the Lagrange multipliers of the shared constraints are not the same for all players. As mentioned in Section~\ref{sec:Normalized}, the most popular tools to solve a GNEP are QVI or MCP reformulations, which produce the normalized solution when used as discussed in the previous literature. While these formulations offer many mathematical and numerical advantages, including compatibility with widely available solvers, generalizing these formulations to allow for non-normalized solutions can be beneficial in many cases as shown by the example in section \ref{exm_2cars}. Next, we propose a simple and novel method to exploit the MCP reformulation of a GNEP to calculate non-normalized solutions. Recall the KKT conditions from the original GNEP formulation (\ref{KKT}). The main idea was to compute a single set of shared Lagrange multipliers and introduce a strictly positive scaling factor for each player's shared constraints. Now let's define for every player $i$ a diagonal matrix of weights:
\begin{equation} \label{factors}
\begin{aligned}
    &~~~~~~~~~~~~~~A_i \in \mathcal{D}^+_m, \\
    \mathcal{D}^+_m := \{&D \in \mathbb{R}^{m \times m} \mid D = \text{diag}(d_1, \cdots, d_m), \\ 
    &d_i \in \mathbb{R}_{++} \, \forall i = 1, \cdots, m \}.
\end{aligned}
\end{equation}

Each $A_i$ is used to scale the fictitious shared Lagrange multipliers for player $i$ to produce the actual Lagrange multipliers. Incorporating these factor matrices into the Lagrangian of each player $i$ gives:
\begin{align}
    \mathcal{L}_i := J_i(x^i,x^{-i})+ \mu_i^Th_i(x^i) &+ \lambda_i^T \cdot g_i(x^i) + \\ &+(A_i \sigma)^T s(x^i,x^{-i})
\end{align}
where $\sigma \in \mathbb{R}^{m_0}$ is the fictitious Lagrangian multipliers vector which is shared by all players.

The KKT conditions in (\ref{KKT}) are reformulated as:
\begin{equation} \label{modifiedKKT}
\begin{aligned}
    & \nabla_{x^i}\mathcal{L}_i(x^i, x^{-i},A_i)= 0, \quad \forall i \in \{1, \dots, M\}, \\
    & 0 \leq \lambda_i \perp g_i \leq 0, \quad \forall i \in \{1, \dots, M\}, \\
    & 0 \leq \sigma \perp s \leq 0, \\
    & \lambda_i \geq 0,h_i=0, \quad \forall i \in \{1, \dots, M\}, \\
    & \sigma \geq 0
\end{aligned}
\end{equation}

In this formulation, a single set of shared Lagrange multipliers $\sigma$ is computed, with the scaling applied differently for each player using the factor matrices $A_i$.  The following theorem shows that a solution to the KKT conditions in (\ref{modifiedKKT}) is a GNE solution.

\begin{theorem} \label{thm1}
    Given a set of matrices $\{A_i\}_{i=1}^M$ as defined in (\ref{factors}) and the tuple $(\bar{x}, \{\bar{\mu}_i\}_{i=1}^M, \{\bar{\lambda}_i\}_{i=1}^M, \bar{\sigma})$ that solves the KKT conditions in (\ref{modifiedKKT}), then $\bar{x}$ is a solution to the GNEP if a suitable constraint qualification holds.
\end{theorem}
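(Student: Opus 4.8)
The plan is to reduce the modified system (\ref{modifiedKKT}) to the original KKT system (\ref{KKT}) by an explicit change of multipliers, and then invoke Theorem~\ref{thm0}. Concretely, given a solution $(\bar{x}, \{\bar{\mu}_i\}_{i=1}^M, \{\bar{\lambda}_i\}_{i=1}^M, \bar{\sigma})$ of (\ref{modifiedKKT}), I would define the candidate per-player shared multipliers $\bar{\sigma}_i := A_i \bar{\sigma}$ for each $i \in \{1,\dots,M\}$, and claim that the tuple $(\bar{x}, \{\bar{\mu}_i\}_{i=1}^M, \{\bar{\lambda}_i\}_{i=1}^M, \{\bar{\sigma}_i\}_{i=1}^M)$ satisfies the original KKT conditions (\ref{KKT}) at $x = \bar{x}$.

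The verification then proceeds condition by condition. For stationarity, since $(A_i\sigma)^T s(x^i,x^{-i}) = \sigma_i^T s(x^i,x^{-i})$ with $\sigma_i = A_i\sigma$, the gradient $\nabla_{x^i}\mathcal{L}_i(x^i,x^{-i},A_i)$ coincides with $\nabla_{x^i}$ of the original Lagrangian (\ref{Lagrangian}) evaluated at multiplier $\bar{\sigma}_i$; hence $\nabla_{x^i}\mathcal{L}_i = 0$ holds in the original sense. The private-constraint conditions $0 \le \bar{\lambda}_i \perp g_i(\bar{x}^i) \le 0$, $\bar{\lambda}_i \ge 0$, and $h_i(\bar{x}^i) = 0$ carry over unchanged. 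The one point requiring the structure of $A_i$ is the shared-constraint complementarity: because $A_i \in \mathcal{D}^+_m$ is diagonal with strictly positive entries, $\bar{\sigma}_i = A_i\bar{\sigma} \ge 0$ whenever $\bar{\sigma} \ge 0$, and componentwise $(\bar{\sigma}_i)_j = 0 \iff \bar{\sigma}_j = 0$; together with $s(\bar{x}) \le 0$ from (\ref{modifiedKKT}), this yields $0 \le \bar{\sigma}_i \perp s(\bar{x}) \le 0$ for every $i$. Thus all of (\ref{KKT}) holds, and applying part~1 of Theorem~\ref{thm0} to $(\bar{x}, \{\bar{\mu}_i\}, \{\bar{\lambda}_i\}, \{\bar{\sigma}_i\})$ — legitimate under Assumption~\ref{ass1} and the stipulated constraint qualification — concludes that $\bar{x}$ solves the GNEP.

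The argument has essentially no hard step; the only place where care is needed is the equivalence of the complementarity conditions, and there the strict positivity of the diagonal entries of $A_i$ (i.e.\ $A_i \in \mathcal{D}^+_m$ rather than merely a nonnegative or singular scaling) is exactly what makes the ``$\Rightarrow$'' direction of $0 \le \bar{\sigma}_i \perp s(\bar{x}) \le 0$ go through. In the write-up I would also remark that the map $\bar{\sigma} \mapsto (A_1\bar{\sigma},\dots,A_M\bar{\sigma})$ exhibits transparently why varying the $A_i$ produces genuinely non-normalized equilibria (the $\bar{\sigma}_i$ need no longer be equal), foreshadowing the tuning-knob interpretation developed in the subsequent sections.
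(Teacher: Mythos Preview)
Your proposal is correct and follows exactly the paper's approach: define $\bar{\sigma}_i := A_i\bar{\sigma}$ and observe that the resulting tuple satisfies the original KKT conditions (\ref{KKT}), then invoke Theorem~\ref{thm0}(1). Your write-up is in fact more detailed than the paper's, which simply asserts that the tuple solves (\ref{KKT}) without the condition-by-condition verification you provide.
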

~
\begin{proof}
The proof of Theorem~\ref{thm1} follows directly from Theorem \ref{thm0}. Define the Lagrange multipliers $\bar{\sigma}_i \in \mathbb{R}^{m_0}$ for player $i$ as:
\begin{equation} \label{eff_sigma}
    \bar{\sigma}_i = A_i \bar{\sigma}, \quad \forall i \in \{1, \dots, M\}.
\end{equation}

The tuple $(\bar{x}, \{\bar{\mu}_i\}_{i=1}^M, \{\bar{\lambda}_i\}_{i=1}^M, \{\bar{\sigma}_i\}_{i=1}^M)$ that solves the KKT conditions in (\ref{KKT})
and thus by Theorem \ref{thm0}, $\bar{x}$ is a solution of the GNEP.
\end{proof}
Theorem~\ref{thm1} provides a method for calculating non-normalized solutions to a GNEP using the tools developed for normalized solutions. In order to compute the  solution  one can use the same MCP formulation as in (\ref{eq:normalized_MCP}) with the same available solvers (e.g., PATH), with the modified Lagrangian that includes the factor matrices. If all factor matrices $A_i$ are identical, the solution reduces to the normalized GNE. 
Without loss of generality, since the factor matrices $A_i$ do not have an absolute scale, we can impose a normalization rule to reduce the number of free parameters. For instance, the factor matrix of player~1 can be fixed as $A_1 = I_{m_0}$. The same solution $(\bar{x}, \{\bar{\mu}_i\}_{i=1}^M, \{\bar{\lambda}_i\}_{i=1}^M, \{\bar{\sigma}_i\}_{i=1}^M)$ remains valid if all factor matrices $A_i$ are divided by a common scalar factor and $\bar{\sigma}$ is multiplied by that same factor, ensuring that the resulting Lagrange multipliers $\bar{\sigma}_i$ remain unchanged. 

An alternative normalization rule can require that, for each shared constraint, the sum of the scaling factors across all players equals one:

\begin{equation} \label{eff_sigma1}
    \sum_{i=1}^N (A_i)_{jj}=1 \qquad\forall j=1,...,m_0.
\end{equation}

Next, we provide two results that offer an interpretation of the matrices $A_i$ and help build intuition for the proposed formulation. 
The first result establishes that, under appropriate scaling (as introduced above) and the assumptions of Rosen’s setting, each solution of the GNEP corresponds to a unique set of factor matrices $A_i$.

\begin{corollary}\label{cor:num1}
 Given a Rosen's Setting GNEP, and if LICQ holds and given a factor matrices normalization rule then every solution of the GNEP corresponds to a unique set of factor matrices (on the set of active shared constraints) and every solution of the GNEP can be expressed by a unique set of factor matrices.
\end{corollary}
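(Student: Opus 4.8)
The plan is to tie the factor matrices $\{A_i\}$ directly to the genuine shared multipliers $\{\bar\sigma_i\}$ of a GNE via the identity $\bar\sigma_i=A_i\bar\sigma$ from~(\ref{eff_sigma}) that appears in the proof of Theorem~\ref{thm1}, and then to lean on Theorem~\ref{thm0}(3), which under LICQ makes those multipliers unique. I would phrase the two assertions of the corollary as a single claim: the correspondence taking a GNE $\bar x$ to the diagonal entries of $\{A_i\}$ on the active shared constraints is both well defined and injective.

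First I would establish existence of a representation. Given a GNE $\bar x$, Theorem~\ref{thm0}(2) supplies multipliers $(\{\bar\mu_i\},\{\bar\lambda_i\},\{\bar\sigma_i\})$ solving~(\ref{KKT}) at $\bar x$, and Theorem~\ref{thm0}(3) makes them unique under LICQ. I would split the shared constraints into the active set $\mathcal A(\bar x)=\{j:s_j(\bar x)=0\}$ and its complement. Off $\mathcal A(\bar x)$, complementarity in~(\ref{KKT}) forces $(\bar\sigma_i)_j=0$ for every $i$, so the entries $(A_i)_{jj}$ there are unconstrained — this is precisely why the statement restricts to the active set. On $\mathcal A(\bar x)$ I would use the normalization rule to fix a common value $\bar\sigma_j$ (for instance $\bar\sigma_j:=\sum_{i=1}^M(\bar\sigma_i)_j$ under rule~(\ref{eff_sigma1}), or $\bar\sigma_j:=(\bar\sigma_1)_j$ under the rule $A_1=I_{m_0}$) and then set $(A_i)_{jj}:=(\bar\sigma_i)_j/\bar\sigma_j$, so that $A_i\bar\sigma=\bar\sigma_i$ on active constraints and $(\bar x,\{\bar\mu_i\},\{\bar\lambda_i\},\bar\sigma)$ solves~(\ref{modifiedKKT}) with these matrices.

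Next I would prove uniqueness. If $(\{A_i\},\bar\sigma)$ and $(\{A_i'\},\bar\sigma')$ both produce $\bar x$ through~(\ref{modifiedKKT}), then, following the proof of Theorem~\ref{thm1}, $A_i\bar\sigma$ and $A_i'\bar\sigma'$ are both valid multiplier tuples for $\bar x$ in~(\ref{KKT}); by Theorem~\ref{thm0}(3) each equals the unique $\{\bar\sigma_i\}$, so $A_i\bar\sigma=A_i'\bar\sigma'$ componentwise. Summing over $i$ on each active component and applying the normalization rule to both representations yields $\bar\sigma_j=\bar\sigma_j'$, whence $(A_i)_{jj}=(A_i')_{jj}$ for all $i$ and all $j\in\mathcal A(\bar x)$.

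The hard part will be the positivity constraint $A_i\in\mathcal D^+_m$. The formula $(A_i)_{jj}=(\bar\sigma_i)_j/\bar\sigma_j$ only makes sense on active constraints with $\bar\sigma_j\neq0$, i.e. (under rule~(\ref{eff_sigma1})) those for which at least one player's multiplier is strictly positive; a weakly active constraint whose multipliers all vanish behaves like an inactive one and must be excluded from the uniqueness claim. Worse, if one player has $(\bar\sigma_i)_j=0$ on such a constraint while another does not, the formula returns $(A_i)_{jj}=0$, which lies on the boundary of $\mathcal D^+_m$. I expect this to require the most care: either the hypotheses should be strengthened to a strict-complementarity-type condition guaranteeing that every player is bound by each active shared constraint, or $\mathcal D^+_m$ should be relaxed to its closure on the active set — and under either reading the uniqueness argument above goes through unchanged once $\bar\sigma_j\neq0$ is secured.
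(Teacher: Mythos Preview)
Your approach is exactly the paper's: the authors' entire proof reads ``This corollary follows directly from Theorem~\ref{thm0}(3) and Theorem~\ref{thm1},'' and you have simply unpacked what that sentence means. Your concern about the positivity constraint $A_i\in\mathcal D^+_m$ when some $(\bar\sigma_i)_j=0$ on an active constraint is legitimate and is not addressed in the paper; the corollary as stated tacitly needs either strict complementarity for every player on every active shared constraint or a relaxation of $\mathcal D^+_m$ to its closure, precisely as you observe.
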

\begin{proof}[Proof of Corollary~\ref{cor:num1}]
This corollary follows directly from Theorem~\ref{thm0}(3) and Theorem~\ref{thm1}.
\end{proof}

Corollary~\ref{cor:num1} implies that the entire solution set of the GNEP can be explored by solving for different admissible configurations of the factor matrices $A_i$ that satisfy the predefined normalization rule. 
Note, however, that not every configuration of factor matrices necessarily yields a feasible or valid equilibrium.

Building on this, the next result characterizes the effect of varying the factor matrices on the players’ costs. 
Specifically, Corollary~\ref{cor:num2} shows that, for a class of separable and convex problems with quadratic shared constraints, reducing the scaling factors of a given player relative to the others cannot worsen that player’s cost.



\begin{corollary} \label{cor:num2}
Consider Rosen's setting of a generalized Nash equilibrium problem (GNEP) with a shared quadratic constraint.
Suppose the following hold:
\begin{enumerate}[label=(\roman*)]
    \item Each player's cost is separable: 
    \[
    J_i(x_i,x_{-i}) = \phi_i(x_i), \qquad i=1,\dots,M
    \]
    with no cross-dependence on the other player’s decision variable, and $\phi_i$ are strongly convex.
    \item The shared constraint is quadratic with symmetric block diagonal matrix $Q$, 
    \[
    s(x) =  \frac{1}{2}x^TQx + q^\top x + b=\frac{1}{2}\sum_{i=1}^Mx_i^TQ_{ii}x_i+\sum_{i=1}^Mq_i^Tx_i+b
    \]
    \item A factor-matrix normalization rule is imposed.
    \item $(x^1,\dots,x^M)$ is a solution of the GNEP corresponding to some set of factor matrices $\{A_i\}_{i=1}^M$, and The linear independence constraint qualification (LICQ) holds at the solution.
\end{enumerate}

Then, reducing the values of the factor matrix $A_i$ of some player $i$ relative to the others will \emph{not deteriorate} the cost of that player $i$.
\end{corollary}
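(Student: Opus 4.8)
The plan is to decouple the game into single-player parametric subproblems coupled through one scalar. Since the shared constraint is a single scalar inequality, each $A_i$ is a positive scalar $a_i$ and player $i$'s effective multiplier is $\sigma_i = a_i\sigma$. Write $s(x)=\sum_j \psi_j(x_j)+b$ with $\psi_j(x_j):=\tfrac12 x_j^\top Q_{jj}x_j + q_j^\top x_j$; each $\psi_j$ is convex because $s$ is convex (Assumption~\ref{ass1}) and $Q$ is block diagonal. Reading off the modified KKT system~(\ref{modifiedKKT}), player $i$'s stationarity together with its private complementarity conditions are precisely the KKT conditions of the strongly convex problem $\min_{x_i\in C_i}\big(\phi_i(x_i)+\sigma_i\psi_i(x_i)\big)$ over the private set $C_i=\{x_i: h_i(x_i)=0,\ g_i(x_i)\le 0\}$, with $\sigma_i$ treated as a fixed weight; write $x_i(\sigma_i)$ for its unique minimizer. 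Hence at the given GNE we have $x_i=x_i(a_i\sigma)$ for all $i$, and $0\le\sigma\perp s\le 0$ leaves two cases: either $\sigma=0$, in which case $x_i=\arg\min_{C_i}\phi_i$ is independent of every $a_i$ and the claim is trivial, or $\sigma>0$ together with the scalar gluing equation $T(\sigma):=\sum_j\psi_j\big(x_j(a_j\sigma)\big)+b=0$. LICQ is used only to keep the multipliers, hence $\sigma$, uniquely determined at the solution.

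First I would establish two monotonicity facts for the parametric map $\theta\mapsto x_i(\theta)$ on $\theta\ge 0$ via the elementary two-point argument: for $\theta_1<\theta_2$ with minimizers $x^1,x^2$, adding the optimality inequalities $\phi_i(x^1)+\theta_1\psi_i(x^1)\le\phi_i(x^2)+\theta_1\psi_i(x^2)$ and $\phi_i(x^2)+\theta_2\psi_i(x^2)\le\phi_i(x^1)+\theta_2\psi_i(x^1)$ gives $\psi_i(x^1)\ge\psi_i(x^2)$, and then the first inequality gives $\phi_i(x^1)\le\phi_i(x^2)$. So $\theta\mapsto\psi_i(x_i(\theta))$ is non-increasing while $\theta\mapsto\phi_i(x_i(\theta))$ is non-decreasing; moreover, on any interval where $\psi_i(x_i(\cdot))$ is constant both inequalities become equalities, so $\phi_i(x_i(\cdot))$ is constant there as well (``flatness propagates''). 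A consequence is that $T$ is non-increasing in $\sigma$; and, assuming the private-optimal profile strictly violates the shared constraint (a mild strict-complementarity-type condition at the reference GNE), the perturbed game still has its shared constraint active with $\sigma>0$, and existence of a perturbed GNE follows from the intermediate value theorem since $T(0)>0$ and $T(\sigma)<0$ for all large $\sigma$ (the game's feasible region being nonempty).

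Next I would carry out the comparative statics. ``Reducing $A_i$ relative to the others'' means, up to the harmless common rescaling already noted in the paper, replacing $a_i$ by $a_i'<a_i$ with the other $a_j$ fixed. Then for every $\sigma\ge 0$, $T'(\sigma)-T(\sigma)=\psi_i\big(x_i(a_i'\sigma)\big)-\psi_i\big(x_i(a_i\sigma)\big)\ge 0$, because $a_i'\sigma\le a_i\sigma$ and $\psi_i(x_i(\cdot))$ is non-increasing. Since $T$ and $T'$ are non-increasing with roots $\sigma$ and $\sigma'$, the pointwise inequality $T'\ge T$ forces $\sigma'\ge\sigma$, except when $T'$ sits at zero on $[\sigma',\sigma]$; in that exceptional case $\sigma'\le\sigma$, hence $a_i'\sigma'<a_i\sigma$, and monotonicity of $\phi_i(x_i(\cdot))$ gives the conclusion immediately. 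Assume therefore $\sigma'\ge\sigma$. Then $a_j\sigma'\ge a_j\sigma$ for $j\ne i$, so $\psi_j\big(x_j(a_j\sigma')\big)\le\psi_j\big(x_j(a_j\sigma)\big)$; subtracting these from the gluing equations $\sum_j\psi_j=-b$ at the two equilibria yields $\psi_i\big(x_i(a_i'\sigma')\big)\ge\psi_i\big(x_i(a_i\sigma)\big)$. Because $\psi_i(x_i(\cdot))$ is non-increasing, either $a_i'\sigma'\le a_i\sigma$, and then $\phi_i\big(x_i(a_i'\sigma')\big)\le\phi_i\big(x_i(a_i\sigma)\big)$ by monotonicity, or $a_i'\sigma'>a_i\sigma$, in which case $\psi_i(x_i(\cdot))$ is flat between these two values and, by flatness propagation, $\phi_i(x_i(\cdot))$ is flat too, so $\phi_i\big(x_i(a_i'\sigma')\big)=\phi_i\big(x_i(a_i\sigma)\big)$. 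In every case player $i$'s cost $\phi_i(x_i)$ does not deteriorate.

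I expect the main obstacle to be exactly the degeneracy handled above: without strict monotonicity one cannot simply ``invert'' $\psi_i(x_i(\cdot))$ or $T'$, and the perturbed GNE multiplier need not be unique, so the naive chain ``$T'\ge T\Rightarrow\sigma'\ge\sigma\Rightarrow\sigma_i'\le\sigma_i\Rightarrow$ lower cost'' can break. The remedy is the flatness-propagation observation, which makes every branch collapse to ``cost weakly decreases''. Secondary technical points — justifying $x_i=x_i(\sigma_i)$, persistence of the active set under perturbation, and existence of the perturbed equilibrium — are routine consequences of convexity, LICQ and the strict-complementarity assumption; extending the argument to several shared constraints or to a non-block-diagonal $Q$ would require replacing these scalar monotonicity arguments by their monotone-operator / comparative-statics analogues, which I would flag as outside the present scope.
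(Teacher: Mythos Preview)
Your proposal is correct and takes a genuinely different route from the paper's proof. The paper differentiates the active KKT system with respect to the scalar factor $\alpha$, solves the resulting bordered linear system via the Schur complement to obtain closed-form expressions for $d\sigma/d\alpha$ and $dx/d\alpha$, and then shows $dJ_i/d\alpha_i\ge 0$ through a Cauchy--Schwarz inequality in the $H^{-1}$ inner product; this requires smoothness of the solution path, invertibility of the sensitivity Jacobian, and the paper's derivation as written omits the private constraints $g_i,h_i$ from the differentiated system. Your argument instead exploits the structural observation that separability of costs together with block-diagonality of $Q$ decouples the stationarity conditions into $M$ independent strongly convex parametric subproblems linked only by the scalar $\sigma$, and then uses the elementary two-point monotonicity-of-penalization inequality. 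This buys you several things: the argument handles finite (not only infinitesimal) changes in $a_i$, needs no differentiability of the solution map, absorbs private constraints for free into the sets $C_i$, and in fact uses only convexity of the separable pieces $\psi_j$ rather than their quadratic form, so it extends beyond hypothesis~(ii). The paper's approach, in return, produces an explicit derivative formula that feeds directly into the ``aggressiveness'' interpretation and the bi-level optimization of Section~\ref{sec:selection}. Your flatness-propagation device to close the degenerate branch where $\psi_i(x_i(\cdot))$ is locally constant has no counterpart in the paper's infinitesimal argument and is exactly what is needed to make the discrete comparison airtight.
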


\begin{proof}[Proof of Corollary~\ref{cor:num2}]
For simplicity we first show the proof for corollary \ref{cor:num2} for a 2 player dynamic game with a single active shared constraint:

KKT conditions of the problem and assuming that the shared constraint is active:
{\small
\begin{equation} \label{col0}
\begin{aligned}
F_1(x_1,x_2,\sigma,\alpha) &:= \nabla_{x_1} J_1(x_1,x_2) + \alpha\sigma\, \nabla_{x_1} s(x_1,x_2)=0,\\
F_2(x_1,x_2,\sigma,\alpha) &:= \nabla_{x_2} J_2(x_1,x_2) + (1-\alpha)\sigma\, \nabla_{x_2} s(x_1,x_2)=0,\\
F_3(x_1,x_2) &:= s(x_1,x_2)=0.
\end{aligned}
\end{equation}
}
Differentiate $F(x,\sigma,\alpha)=0$ w.r.t.\ $\alpha$ and solve the bordered linear system:
\begin{equation}\label{col1}
\underbrace{\begin{bmatrix}
H_{11} & H_{12} & \alpha\,\nabla_{x_1}s \\
H_{21} & H_{22} & (1-\alpha)\,\nabla_{x_2}s \\
(\nabla_{x_1}s)^{\!\top} & (\nabla_{x_2}s)^{\!\top} & 0
\end{bmatrix}}_{\mathcal{J}}
\begin{bmatrix}
\dfrac{d x_1}{d\alpha}\\[4pt]
\dfrac{d x_2}{d\alpha}\\[4pt]
\dfrac{d \sigma}{d\alpha}
\end{bmatrix}
=
\begin{bmatrix}
-\sigma\,\nabla_{x_1}s\\[4pt]
\phantom{-}\sigma\,\nabla_{x_2}s\\[4pt]
0
\end{bmatrix}
\end{equation}
where
{\small
\begin{align*}
H_{11}&=\nabla^2_{x_1x_1} \phi_1 + \alpha\sigma\, Q_{11}, &
H_{12}&=\alpha\sigma\, Q_{12},\\
H_{21}&=(1-\alpha)\sigma\, Q_{21}, &
H_{22}&=\nabla^2_{x_2x_2} \phi_2 + (1-\alpha)\sigma\, Q_{22} \\
\nabla_{x_1}s &= Q_{11}x_1+q_1, & \nabla_{x_2}s &= Q_{22}x_2+q_2
\end{align*}
}

Define compact blocks:
\begin{equation} \label{col7}
    \begin{aligned}
    H&=\begin{bmatrix} H_{11} & H_{12}\\[2pt] H_{21} & H_{22}\end{bmatrix},\qquad
    g=\begin{bmatrix} \alpha\,\nabla_{x_1}s \\ (1-\alpha)\,\nabla_{x_2}s \end{bmatrix},\qquad \\
    r&=\begin{bmatrix} \nabla_{x_1}s \\ \nabla_{x_2}s \end{bmatrix},\qquad\qquad
    b=\sigma\begin{bmatrix} \nabla_{x_1}s \\ -\,\nabla_{x_2}s \end{bmatrix}.
    \end{aligned}
\end{equation}

Solve for $\dfrac{d\sigma}{d\alpha}$ and $\dfrac{dx}{d\alpha}=[\dfrac{dx_1}{d\alpha},\dfrac{dx_2}{d\alpha}]^T$  with the Schur complement. Notice that the matrix $H \succ 0$  - strong convexity of $\phi_i$ and the PSD of $\sigma Q_{ii}$ means that $H^{-1}$ exists and is PD:

\begin{equation}\label{col4}
\frac{d\sigma}{d\alpha} 
= \,\frac{r^{\!\top} H^{-1} b}{\,r^{\!\top} H^{-1} g\,}, 
\qquad
\frac{d x}{d\alpha} 
= -\,H^{-1}\!\left(g \,\frac{d\sigma}{d\alpha} + b\right)
\end{equation}
Now,
\[
\begin{aligned}
\frac{dJ_1}{d\alpha}
= \nabla_{x_1}J_1^{\!\top}\frac{dx_1}{d\alpha}
  + &\nabla_{x_2}J_1^{\!\top} \frac{dx_2}{d\alpha}
= \\ &= \underbrace{\begin{bmatrix}\nabla_{x_1}J_1 \quad \nabla_{x_2}J_1\end{bmatrix}}_{c^T}
\frac{dx}{d\alpha}
= c^{\!\top}\frac{dx}{d\alpha}.
\end{aligned}
\]

Using $F_1=0$ and separability, we have $\nabla_{x_2}J_1=0$ and
\[
\nabla_{x_1}J_1 \;=\; -\,\alpha\sigma\,\nabla_{x_1}s.
\]
Hence
\[ 
c \;=\; \begin{bmatrix}\nabla_{x_1}J_1 \\[2pt] 0\end{bmatrix}
\;=\; -\,\alpha\sigma\,r_1,
\quad
r_1 := \begin{bmatrix}\nabla_{x_1}s\\[2pt]0\end{bmatrix}, ~
r_2 := \begin{bmatrix}0\\[2pt]\nabla_{x_2}s\end{bmatrix}.
\]
Recall from \eqref{col7} that
\[
\begin{aligned}
r &= \begin{bmatrix}\nabla_{x_1}s\\ \nabla_{x_2}s\end{bmatrix} = r_1 + r_2, \\
g &= \begin{bmatrix}\alpha\,\nabla_{x_1}s\\ (1-\alpha)\,\nabla_{x_2}s\end{bmatrix}
= \alpha r_1 + (1-\alpha) r_2, \\
b &= \sigma\begin{bmatrix}\nabla_{x_1}s\\ -\,\nabla_{x_2}s\end{bmatrix}
= \sigma(r_1 - r_2).
\end{aligned}
\]

\noindent
Introduce the $H^{-1}$–inner product $\langle u,v\rangle= \langle u,v\rangle_{H^{-1}} := u^\top H^{-1}v$ and set
\[
a := \langle r_1,r_1\rangle > 0,\quad
d := \langle r_2,r_2\rangle > 0,\quad
c_0 := \langle r_1,r_2\rangle=0,
\]
so that $ad \ge 0$. A direct calculation gives
\[
r^\top H^{-1} b = \sigma(a-d),
\qquad
r^\top H^{-1} g = \alpha a + (1-\alpha) d.
\]
Therefore, from \eqref{col4},
\[
\frac{d\sigma}{d\alpha}
= -\,\sigma\,\frac{a-d}{\alpha a + (1-\alpha)d}.
\]

\noindent
Using $\dfrac{dx}{d\alpha} = -\,H^{-1}\!\big(g\,\dfrac{d\sigma}{d\alpha} + b\big)$ and $c=-\alpha\sigma r_1$,
\[
\frac{dJ_1}{d\alpha}
= c^\top \frac{dx}{d\alpha}
= -\,c^\top H^{-1}\!\left(g\,\frac{d\sigma}{d\alpha} + b\right)
= \alpha\sigma\, r_1^\top H^{-1}\!\left(g\,\frac{d\sigma}{d\alpha} + b\right).
\]
Substituting the identities above and simplifying yields the compact expression

Thus $J_1$ is non-decreasing in $\alpha$, so reducing the factor of player~1 (i.e., decreasing $\alpha$ under the normalization rule) cannot worsen player~1's cost. This completes the proof for the two-player case with a single active shared constraint.

The same reasoning can be extended to a general $M$-player game with multiple shared constraints under the given assumptions. Specifically, we define

\begin{equation}\label{eq:ext1}
\begin{aligned}
r_i &:= [0,\dots,\nabla_{x_i}s,\dots,0] \in \mathbb{R}^n, \\
H   &:= \operatorname{blkdiag}\big(
       \nabla^2_{x_1x_1}\phi_1 + \alpha_1\sigma Q_{11}, \dots, \\
&\qquad \qquad \qquad \dots,\nabla^2_{x_Mx_M}\phi_M + \alpha_M\sigma Q_{MM}
       \big) \succ 0 .
\end{aligned}
\end{equation}

Stacking the KKT stationarity and shared-constraint equations and 
differentiating with respect to $\alpha_i$ yields the same bordered linear system 
as in the two-player case. Solving it via the Schur complement gives
\[
\frac{d\sigma}{d\alpha_i}
= 
-\,\sigma\,\frac{\langle r,\,r_i\rangle_{H^{-1}}}{\langle r,\,g\rangle_{H^{-1}}},
\qquad
\frac{dx}{d\alpha_i}
= 
-\,H^{-1}\!\left(
g\,\frac{d\sigma}{d\alpha_i} + b_i
\right),
\]
where 
$r := \sum_{k=1}^M r_k$, 
$g := \sum_{k=1}^M \alpha_k r_k$, 
and 
$b_i := \sigma \bigl(r_i - \sum_{k\neq i} r_k\bigr)$. 
Because the costs are separable, 
$\nabla_x J_i = [0,\dots,\nabla_{x_i}J_i,\dots, 0] 
= -\alpha_i\sigma\,r_i$, 
so
\begin{equation}\label{eq:dJi_dalpha}
\begin{aligned}
\frac{dJ_i}{d\alpha_i}
&= 
\nabla_x J_i^\top \frac{dx}{d\alpha_i} \\[6pt]
&=
\alpha_i \sigma^2
\,
\frac{
\langle r_i,r_i\rangle_{H^{-1}}\,\langle r,r\rangle_{H^{-1}}
- 
\langle r_i,r\rangle_{H^{-1}}^2
}{
\langle r, g\rangle_{H^{-1}}
}
\;\ge 0 .
\end{aligned}
\end{equation}

by the Cauchy--Schwarz inequality in the $H^{-1}$ inner product, and under the block-diagonal assumption $Q_{ij} = 0$ for $i\neq j$, 
we obtain 
$\langle r_i,r_j\rangle_{H^{-1}} = 0$ for $i\neq j$, 
the denominator simplifies to 
$\langle r,g\rangle_{H^{-1}}=\sum_{k=1}^M \alpha_k |r_k|_{H^{-1}}^2>0$,
Hence, $\tfrac{dJ_i}{d\alpha_i}\ge 0$, establishing that decreasing the entries of $A_i$ relative to those of the other players cannot worsen player~$i$'s cost in the general $M$-player case. This completes the proof.
\end{proof}

The factor matrices $A_i$ can be interpreted as representing the relative ``aggressiveness" of each player: lower values in $A_i$ for active shared constraints indicate a reduction in the corresponding Lagrange multipliers, allowing the player to improve their cost relative to others. Notice that the diagonal entries of $A_i$ are strictly positive, and therefore any positive terms in $\bar{\sigma}$ (representing active constraints) remain positive in $\bar{\sigma}_i$, while zero terms in $\bar{\sigma}$ remain zero in $\bar{\sigma}_i$.


\section{Analytical Examples} \label{sec:examples}
We use two analytical examples to show the effectiveness of the method presented. 
\subsection{Example 1} 

The first is an example used in \cite{facchinei2007generalized,facchinei2009generalized}. The problem is a dynamic game between 2 player with the following setup (the variables notation are as in the original papers):
\begin{equation} \label{exm1:setup}
\begin{alignedat}{2}
    & \min_x \, (x-1)^2 \qquad \qquad && \min_y \, \left(y - \frac{1}{2}\right)^2 \\
    & \text{s.t.} \quad && \text{s.t.} \\
    & x + y \leq 1 \qquad && x + y \leq 1
\end{alignedat}
\end{equation}
Where $x$ and $y$ are the strategy of each player. Applying the technique presented in this paper, we write the Lagrangian for each player with $\alpha_1$ and $\alpha_2$ as the factors on the shared constraints Lagrangian of each player (since there is a single shared constraint, the factors are scalars):
\begin{equation} \label{exm1:Lagra}
\begin{split}
    &\mathcal{L}_1 = (x-1)^2 + \alpha_1\sigma (x+y-1) \\
    &\mathcal{L}_2 = (y-\frac{1}{2})^2 + \alpha_2\sigma (x+y-1)
\end{split}
\end{equation}

Applying the KKT conditions and according to Theorem \ref{thm1} for every given $\alpha_1$ and $\alpha_2$, the solutions for $x^*$ and $y^*$ are NE:
\begin{equation} \label{exm1:final_sol}
    x^*=1-\frac{\alpha_1}{2(\alpha_1+\alpha_2)} \qquad y^*=\frac{1}{2}-\frac{\alpha_2}{2(\alpha_1+\alpha_2)}
\end{equation}
Furthermore, we can define $\alpha$ such that:
\begin{equation} \label{exm1:final_sol2}
    \alpha = \frac{\alpha_1}{\alpha_1+\alpha_2} \Rightarrow 1-\alpha=\frac{\alpha_2}{\alpha_1+\alpha_2} \Rightarrow \left\{
\begin{aligned}
    & x^* = 1-\frac{1}{2}\alpha \\
    & y^* = \frac{\alpha}{2} \\
\end{aligned}
\right.
\end{equation}
Since $\alpha_1>0$ and $\alpha_2>0$ it suggests that $\alpha \in (0,1)$. The solution for $\alpha\rightarrow0$ gives the solution that converges to $(x^*=1,y^*=0)$ which is the unconstrained solution for player 1 and player 2 just responds - Stackelberg Equilibrium where player 1 is the leader and player 2 is the follower. The solution for $\alpha\rightarrow1$ gives the solution the converges to $(x^*=\frac{1}{2},y^*=\frac{1}{2})$ which is the unconstrained solution for player 2 and player 1 just responds - Stackelberg Equilibrium where player 2 is the leader and player 1 is the follower. For $\alpha=\frac{1}{2}$ the solution corresponds to the normalized solution $(x^*=\frac{3}{4},y^*=\frac{1}{4})$. The proposed method allows to compute all possible GNEs, whose existence was mentioned in \cite{facchinei2007generalized,facchinei2009generalized}.
The second example given here is a dynamic game presented in (\ref{exm_2cars}). Applying the technique presented in this paper, notating the factor as $\alpha_1,\alpha_2,\alpha_3$ for the shared constraints factors, and applying the KKT condition gives the following solution:
\begin{equation} \label{norm_exp:sol4}
\begin{split}
    x_1 &= x_1(0) + \Delta t^2\\
    x_2 = x_3=&\frac{\alpha_3x_2(0)  + \alpha_2x_3(0)+\alpha_3\Delta t^2}{\alpha_2+\alpha_3}
\end{split}
\end{equation}
It can be seen that the solution is independent of $\alpha_1$. Furthermore, we can define $\alpha$ such that:
\begin{equation} \label{exm1:final_sol3}
\begin{split}
    &\alpha = \frac{\alpha_2}{\alpha_2+\alpha_3} \Rightarrow 1-\alpha=\frac{\alpha_3}{\alpha_2+\alpha_3} \Rightarrow \\ &\Rightarrow \left\{
\begin{aligned}
    & v^*_1=1, x^*_1 = 1 \\
    &v_2^* = 1-0.75\alpha \\
    &v_3^* = 0.75(1-\alpha) \\
    & x^*_2 = x^*_3 = \frac{3}{2}-\frac{3}{4}\alpha
\end{aligned}
\right.
\end{split}
\end{equation}
Since $\alpha_2>0$ and $\alpha_3>0$, it suggests that $\alpha \in (0,1)$. This gives all the solutions for $x_2=x_3 \in (0.75,1.125)$. From the equilibrium conditions, it follows that these are all GNEs of the given GNEP.

\subsection{Example 2} 

Another example presented here is Harker's problem solved in \cite{harker1991generalized, nabetani2011parametrized}:
We consider the following two-player Generalized Nash Equilibrium Problem (GNEP):

\begin{align*}
\textbf{Player 1:}\quad 
&\min_{x_1}\; x_1^2 + \frac{8}{3}x_1 x_2 - 34x_1 \\
&\text{s.t.}\; 0 \le x_1 \le 10 \\
&x_1 + x_2 \le 15, \\
\textbf{Player 2:}\quad 
&\min_{x_2}\; x_2^2 + \tfrac{5}{4} x_1 x_2 - 24.25 x_2 \\
&\text{s.t.}\; 0 \le x_2 \le 10 \\ 
&x_1 + x_2 \le 15.
\end{align*}

Our proposed solution procedure involves defining the Lagrangian for each player, introducing a scaling parameter for the shared constraint:

\begin{equation}
\begin{aligned}
    L_1 &= x_1^2 + \frac{8}{3}x_1 x_2 - 34x_1 - \lambda_1^1x_1 + \\ &\qquad\qquad + \lambda_1^2(x_1-10) + \alpha_1\sigma(x_1+x_2-15), \\
    L_2 &= x_2^2 + \tfrac{5}{4} x_1 x_2 - 24.25 x_2 - \lambda_2^1x_2 +  \\&\qquad\qquad + \lambda_2^2(x_2-10) + \alpha_2\sigma(x_1+x_2-15),
\end{aligned}
\end{equation}

where $\sigma$ is the unscaled shared Lagrange multiplier, and $\alpha_1$, $\alpha_2$ are the scaling factors for players 1 and 2, respectively. Without loss of generality, we set $\alpha_1 = 1$, yielding:

\begin{equation}
\begin{aligned}
    L_1 &= x_1^2 + \frac{8}{3}x_1 x_2 - 34x_1 - \lambda_1^1x_1 + \\ &\qquad\qquad \lambda_1^2(x_1-10) + \sigma(x_1+x_2-15), \\
    L_2 &= x_2^2 + \tfrac{5}{4} x_1 x_2 - 24.25 x_2 - \lambda_2^1x_2 + \\& \qquad\qquad \lambda_2^2(x_2-10) + \alpha\sigma(x_1+x_2-15),
\end{aligned}
\end{equation}

where $\alpha = \alpha_2 \in (0, \infty]$.
The KKT conditions for the problem are:

\begin{equation}
\begin{aligned}
    &\frac{\partial L_1}{\partial x_1} = 0, \quad \frac{\partial L_2}{\partial x_2} = 0, \\
    &\lambda_1^1 \cdot x_1 = 0, \quad \lambda_1^2(x_1 - 10) = 0, \\
    &\lambda_2^1 \cdot x_2 = 0, \quad \lambda_2^2(x_2 - 10) = 0, \\
    &\sigma \cdot (x_1 + x_2 - 15) = 0, \\
    &\lambda_1^1, \lambda_1^2, \lambda_2^1, \lambda_2^2, \sigma \ge 0.
\end{aligned}
\end{equation}

The stationary conditions are:

\begin{equation}
\begin{aligned}
    \frac{\partial L_1}{\partial x_1} &= 2x_1 + \frac{8}{3}x_2 - 34 - \lambda_1^1 + \lambda_1^2 + \sigma = 0, \\
    \frac{\partial L_2}{\partial x_2} &= 2x_2 + \frac{5}{4}x_1 - 24.25 - \lambda_2^1 + \lambda_2^2 + \alpha\sigma = 0.
\end{aligned}
\end{equation}

One possible solution arises when all constraints are inactive (i.e., all Lagrange multipliers are zero):

\begin{equation}
\begin{aligned}
    &2x_1 + \frac{8}{3}x_2 - 34 = 0, \\
    &2x_2 + \frac{5}{4}x_1 - 24.25 = 0,
\end{aligned}
\quad \Rightarrow \quad
x_1^* = 5, \quad x_2^* = 9.
\end{equation}

This yields one valid solution. Other solutions arise when at least one constraint is active. For example, if the shared constraint is active ($\sigma > 0$), then:

\begin{equation}
x_1 + x_2 = 15.
\end{equation}

Substituting into the stationarity conditions:

\begin{equation}
\begin{aligned}
    2x_1 + \frac{8}{3}x_2 - 34 + \sigma &= 0, \\
    2x_2 + \frac{5}{4}x_1 - 24.25 + \alpha\sigma &= 0,
\end{aligned}
\end{equation}

Solving this system yields:

\begin{equation}
\begin{aligned}
    \sigma^*(\alpha) &= \frac{8}{8\alpha - 9}, \\
    x_1^*(\alpha) &= \frac{72\alpha - 69}{8\alpha - 9}, \\
    x_2^*(\alpha) &= \frac{48\alpha - 66}{8\alpha - 9}.
\end{aligned}
\end{equation}

Since $\sigma > 0$, we require $\alpha \in \left(\frac{9}{8}, \infty\right]$. Imposing the constraint $x_1 \le 10$ further restricts $\alpha \in \left(\frac{21}{8}, \infty\right]$.

This example demonstrates the analytical tractability and flexibility of the proposed method. By introducing scaling parameters on shared constraints, we can derive closed-form expressions for the entire set of GNE solutions, even in nonlinear and non-convex settings. Compared to the parametrized VI approach, our method offers clearer guarantees of solution validity, reduced dimensionality, and broader compatibility with existing solvers. We believe this framework opens new avenues for efficiently solving complex multi-agent optimization problems.

\section{Selecting a GNE Point Among Many} \label{sec:selection}
In previous section we have shown how to utilize the KKT condition either by directly solving them or by using an MCP formulation to calculate a set of possible GNE of a GNEP.  
To choose a solution from the possible GNE set, a bi-level optimization scheme is suggested in this paper. The optimization problem we propose chooses the set of factor matrices $\{A_i\}_{i=1}^N$ which minimize a given cost function:
\begin{equation} \label{opt:biopt}
\begin{split}
    &\min_{A_i,..,A_N\in \mathcal{D}^+_m} {J_0(x^*)} \\
    & s.t. \quad (x^i)^* \in \mathcal{S}_i(x^{-i},A_1,...,A_M) \quad \forall i \in \{1,...,M\}
\end{split}
\end{equation}
In general, the bi-level optimization (\ref{opt:biopt}) can be hard to solve. The cost to minimize can be of any form and can try to achieve different goals. A few possible objectives include: 1) minimize the sum of all the players' costs. 2) Optimize for the cost of one of the players - this will lead to a strategy close to a Stackelberg Equilibrium (in a 2 players scenario). 3) Optimize over some property of the game, for instance maximize the average velocity in a racing game or maximize the interaction between players (to have a game which is fun to watch). To see the effect of such bi-level optimization, the problems presented in section \ref{sec:examples} are solved by minimizing the sum of all the players' costs:
\begin{equation} \label{opt:cost}
    J_0 = \sum_{i=1}^M{J_i(x^i,x^{-i})}
\end{equation}
In the case of the first example, the bi-level optimization becomes a one-layer optimization since the GNE solution of the DG is known (\ref{exm1:final_sol2}):
\begin{equation} \label{opt:opt1}
\begin{split}
    \min_{0\leq\alpha\leq1} &{(x^*-1)^2+(y^*-\frac{1}{2})^2} \\
    s.t.~&x^*=1-\frac{1}{2}\alpha,\quad  y^*=\frac{\alpha}{2}
\end{split}
\end{equation}
Substituting the optimal solution for both players into the cost:
\begin{equation} \label{opt:opt2}
\begin{split}
    &J_0(\alpha)=\frac{1}{4}(2\alpha^2-\alpha+1) 
    \Rightarrow \frac{dJ_0}{d\alpha}=\alpha-1
    \end{split}
\end{equation}
By setting $\frac{dJ_0}{d\alpha}=0$, the optimal solution given the chosen $J_0$ is $\alpha^*=1$.
Note that, in general, one should exclude boundary solution of  $\alpha$ (i.e. $0$ and $1$) as joint constraints might not be satisfied by one player. In this example $\alpha=1$ yields $x = y = \frac{1}{2}$ which is a feasible solution.
Recall that the normalized solution approach gives a different solution $(x=\frac{3}{4},y=\frac{1}{4})$. For the same type of high level cost function,  the example presented in~(\ref{exm_2cars}) results in the following optimization:
\begin{equation} \label{opt:opt4}
\begin{split}
    \min_{0<\alpha<1} &{-x^*_1+x^*_2+\frac{(v_1^*)^2+(v_2^*)^2+(v_3^*)^2}{2}} \\
    & s.t. \text{ solution in~(\ref{exm1:final_sol3})}
\end{split}
\end{equation}
Substituting the terms for $(x_1^*,v_1^*,x_2^*,v_2^*)$ into $J_0$ 
and solving for the optimal $\alpha^*$ gives 
$\alpha^*=1$ and this corresponds to the solution:
\begin{equation}
    (x_1^*,x_2^*,x_3^*) = (1,0.75,0.75)
\end{equation}
This solution is much more intuitive relative to the normalized solutions. In this case, car 3 actually helps car 1 win the race. This example illustrates a possible problem of considering only a single predefined solution of the GNEP, like the normalized solution.

\section{Two-Cars Racing Problem} \label{sec:results}
One of the most popular approaches to solving an autonomous car racing problem is by using a Dynamic Game (DG) approach. In a 2 cars race, each car has a unique dynamic model and a cost function to minimize. A DG formulation is set up for each car with some shared prediction horizon. The game’s shared constraint is Collision Avoidance (CA) between both players. The DG is solved by finding the GNE of the problem, usually using a MCP formulation solved with a numerical solver, and the shared constraints Lagrange multipliers are assumed to be the same \cite{zhu2023sequential,liu2023learning}. This problem often has multiple GNEs, and there is no clear reason to prefer the normalized solution over others.

A kinematic bicycle model for each car is used, with the following states and inputs:
\begin{equation} \label{states}
\begin{aligned}
    & x^i_k = [v, \psi, s, t, X, Y], \quad i=1,2 \quad k = 0,...,N \\
    & u^i_k = [u_a, u_\delta], \quad i=1,2 \quad k = 0,...,N-1
\end{aligned}
\end{equation}
where, $v$ represents the total velocity, $\psi$ is the angular state relative to the center-line, and $s, v_t$ are the arc-length position of the car in the local Frenet reference frame relative to the center line of the track and lateral deviation from the center-line, respectively. $X$ and $Y$ are the inertial position of the car. $u_a$ is the longitudinal acceleration and $u_\delta$ is the steering of the front wheels $N=10$ is the prediction horizon. Finally, the time step is set to $\Delta t=0.1[sec]$. 

To illustrate the effectiveness of the proposed method, the same dynamic model and cost function is used for both players. The Dynamic Game formulation for player $i \in \{1,2\}$ is given by:
\begin{equation}
\begin{aligned}
    &\min_{x^i, u^i} -s^i_N + s^{-i}_N + \frac{\beta}{2} \sum_{k=0}^{N-1} \|u^i_k\|^2 \\  
    &\text{s.t.} \quad x^i_{k+1} = f(x^i_k, u^i_k), \quad x^i_k \in \mathcal{X}, \quad u^i_k \in \mathcal{U}, \\  
    & (X^1_k - X^2_k)^2 + (Y^1_k - Y^2_k)^2 \geq d_{safe}, \quad k = 1, \dots, N-1,
\end{aligned}
\nonumber
\end{equation}
where, $f$ is the dynamics function, $\mathcal{X}$ is the feasible set of the states, $\mathcal{U}$ is the feasible set of the inputs, and $d_{safe} = 0.4[m]$ is the minimal distance between both vehicles, and $\beta = 10^{-1}$. As can be seen, both cars have the same characteristics and the same cost function properties. Each car aims to maximize its relative progress while minimizing control effort. 


To compute the GNE of this DG, an MCP formulation is numerically solved using the PATH solver \cite{dirkse1995path}. To define the Lagrangian of each car using the method presented in section \ref{sec:Non_Normalized}, a different factor should be given to each collision avoidance (CA) constraint. In this problem, there are $N$ shared constraints for each car. To make the solution more tractable, the number of factors is lowered. The factor matrix of the first car is chosen to be $I_{N}$ as explained in section \ref{sec:Non_Normalized}, and the factor matrix of car 2 is reduced to $\alpha I_{N}$ which can further reduced to a scalar $\alpha$. The physical interpretation of $\alpha \in \mathbb{R}_{++}$ is the relative aggressiveness of player 2 relative to player 1. If $\alpha <1$ player 2 is more aggressive than player 1, and for $\alpha>1$ player 1 is more aggressive than player 2.

This means that there is a single parameter that factors the shared constraints Lagrange multipliers:
\begin{equation}
\begin{aligned}
\label{DG_formulation}
    \mathcal{L}_1& = -s^1_N + s^2_N + \frac{\beta}{2}\sum_{k=0}^{N-1}{\|u^1_k\|^2} +\mu_1^T \cdot h(x^1, u^1) + \\ & \qquad\qquad +\lambda_1^T\cdot g(x^1,u^1)
    +\sigma^T\cdot g_{CA}(x^1,x^2) \\
    \mathcal{L}_2& = -s^2_N + s^1_N + \frac{\beta}{2}\sum_{k=0}^{N-1}{\|u^2_k\|^2} + \mu_2^T \cdot h(x^2,u^2) + \\ & \qquad\qquad + \lambda_2^T\cdot g(x^2,u^2) + \alpha \cdot\sigma^T\cdot g_{CA}(x^1,x^2)
\end{aligned}
\end{equation}
where  $\mu_1, \mu_2$ are the Lagrange multipliers associated with the dynamics (equality) constraints  $h$. $\lambda_1, \lambda_2$ are the Lagrange multipliers associated with the state and input constraints $g$. $\sigma$ is the fictitious Lagrange multiplier vector associated with the CA constraint $g_{CA}$. The next step is to apply the stationary condition on each Lagrangian and complementary slackness conditions:
\begin{equation}
\begin{aligned}
&\frac{\partial \mathcal{L}_1}{\partial(x^1,u^1)} = 0,\quad \frac{\partial \mathcal{L}_2}{\partial(x^2,u^2)} = 0 \\
&x^1_{k+1} - f_1(x^1_k, u^1_k) = 0,\quad x^2_{k+1} - f_2(x^2_k, u^2_k) = 0 \\
& g_1(x^1,u^1) \leq 0,\quad  g_2(x^2,u^2) \leq 0 \\
& g_{CA}(x^1,x^2) \leq 0
\end{aligned}
\end{equation}
Solving the above KKT equations for any value of $\alpha \in (0,\infty)$ gives a valid GNE solution.

\subsection{Quantitative Analysis of Tuning Aggressiveness}

The solution set of two examples are presented here to illustrate the effectiveness and added value of our method. Each example corresponds to a specific initial state of the cars on different tracks.

The first example is calculated on a straight track. Car 1 (blue) is ahead of car 2 (green) but has a velocity disadvantage. Figure \ref{fig:race_straight} shows the solution of the race for different $\alpha$. The solution for $\alpha \rightarrow 0$ gives a straight trajectory (red line) for car 2 and this corresponds to car 2 being more aggressive than car 1. The solution for $\alpha \rightarrow \infty$ gives a straight trajectory (cyan line) for car 1 and this corresponds to car 1 being more aggressive than car 2. The GNE solution changes continuously as $\alpha$ changes. The lower graph shows the cost function of both cars as function of $\alpha$. A car can improve its cost by playing more aggressively. On the left side of the graph the velocity and steering profile of the cars are presented. The black line represent the normalized solution.

\begin{figure}
    \centering
    \includegraphics[scale=0.29]{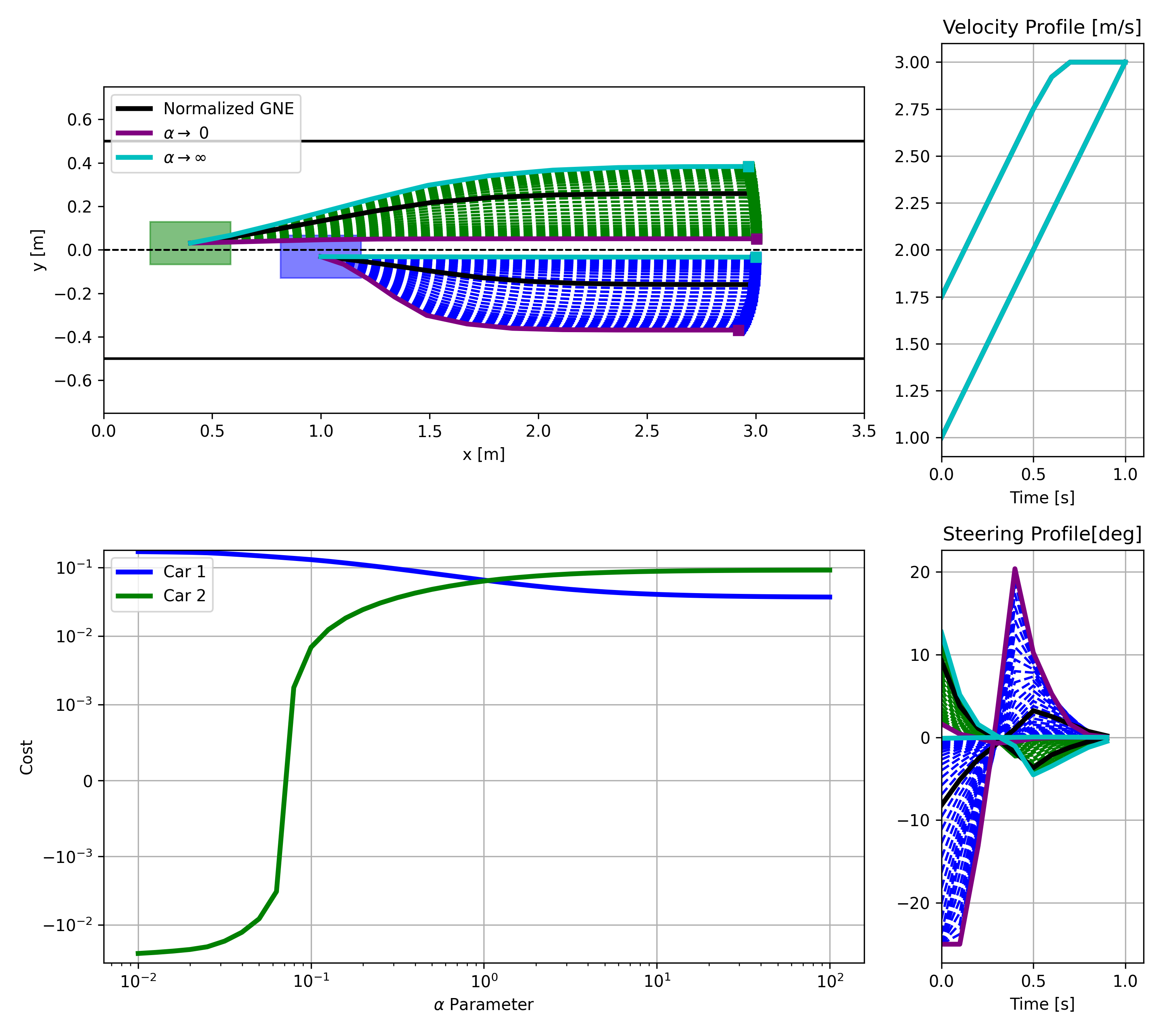}

    \caption{First Qualitative Example: Racing Problem Solutions on a Straight Track}
    \label{fig:race_straight}
\end{figure}

The second example is calculated on a curved track. Car 1 is ahead of car 2 but has a velocity disadvantage. Figure \ref{fig:race_circle} shows the solution of the race for different $\alpha$. In this example there are 2 types of possibles solutions. For $\alpha \rightarrow 0$ Car 2 is much more aggressive and cuts into the corner which enforces Car 1 to take the wider turn. On the other hand, for $\alpha \rightarrow \infty$ car 1 is much more aggressive and it cuts into the corner before car 2 can do it - this enforces car 2 to take the wider turn.

The GNE solution does not change continuously as $\alpha$ changes. The lower graph shows the cost function of both cars as function of $\alpha$. This multi modal solution was not possible if only the normalized solution is calculated. It can be seen that for $\alpha \approx10$ there is a jump in the costs which corresponds to the jump in the solution types. On the left side of the graph the velocity and steering profile of the cars are presented.
\begin{figure}
    \centering
    \includegraphics[scale=0.29]{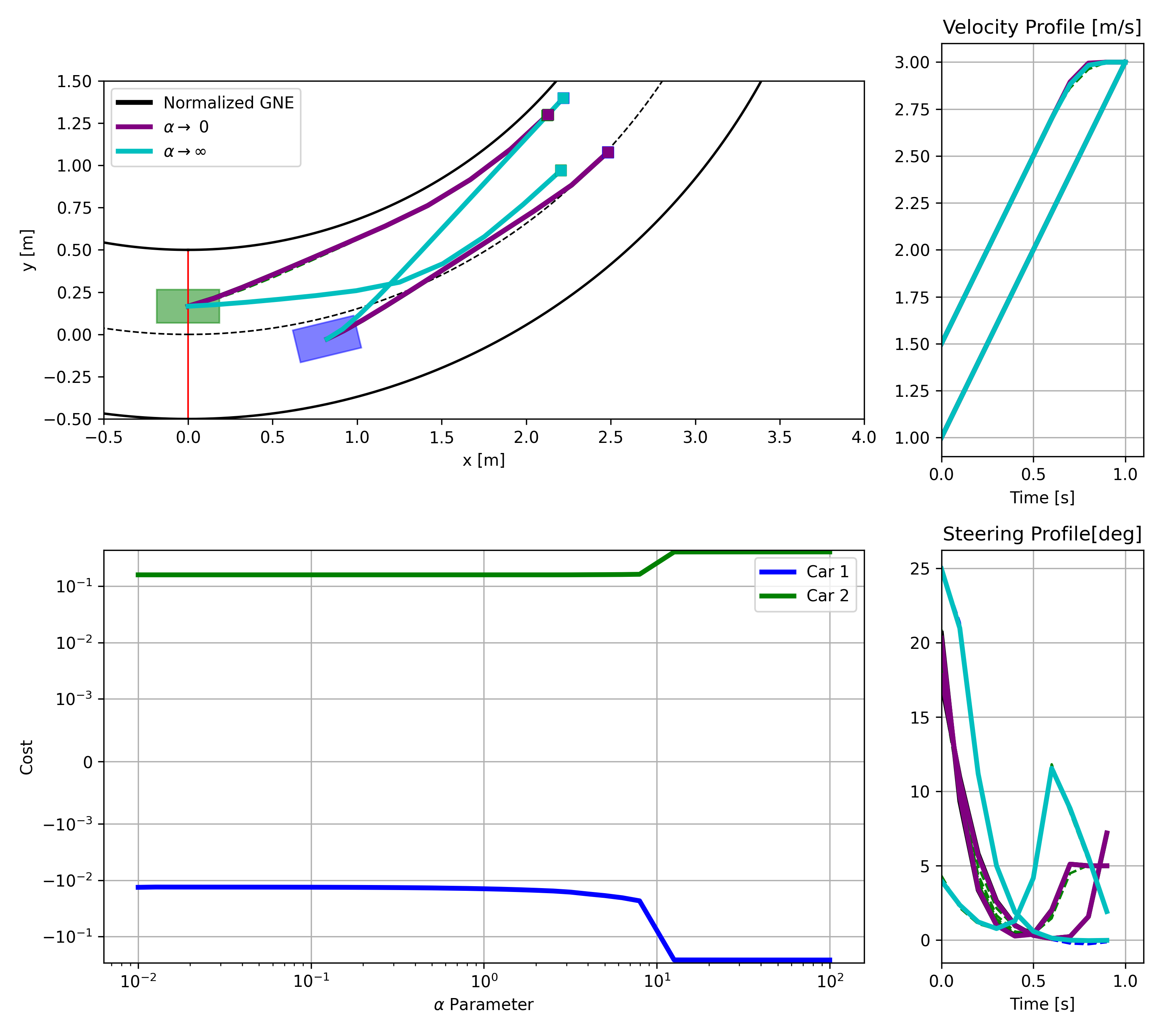}

    \caption{Second Qualitative Example: Racing Problem Solutions on a Curved Track}
    \label{fig:race_circle}
\end{figure}

\subsection{Quantitative Results: A Numerical Study}

The examples above show the set of options a race car has at different situations of the race. Choosing the right strategy from all possible is essential to get an advantage. The intuition suggests that the more aggressive a car will act (meaning, choosing lower $\alpha$) the better its performance will be. To prove this, we present a numerical study of a racing simulation of 2 racing cars. Both cars race by solving a racing DG problem as presented in (\ref{DG_formulation}). 

The scenario investigated is on an L-shaped track where the cars race in the counter-clock direction. The initial state of the cars is randomly initialized along the track with velocities such that the trailing car (called ego) has a velocity advantage over the forward car (Opponent).

Two scenarios are investigated. In the first, both cars implement a strategy that is the result of the normalized solution ($\alpha = 1$). In the second scenario, the opponent continues to implement the normalized strategy (meaning, it solve the DG while assuming both car use a normalized solution) while the ego car solves the DG with non-normalized solution ($\alpha=0.05$).

In both cases, the goal of the ego car is to overtake the opponent during a $2[sec]$ race started from the randomized initial conditions. The simulation is \emph{closed-loop}, meaning that each player solves the DG at each time step, implements a single control input and solves the DG again to get the next time step.

\begin{figure}
    \centering
    \includegraphics[scale=0.14]{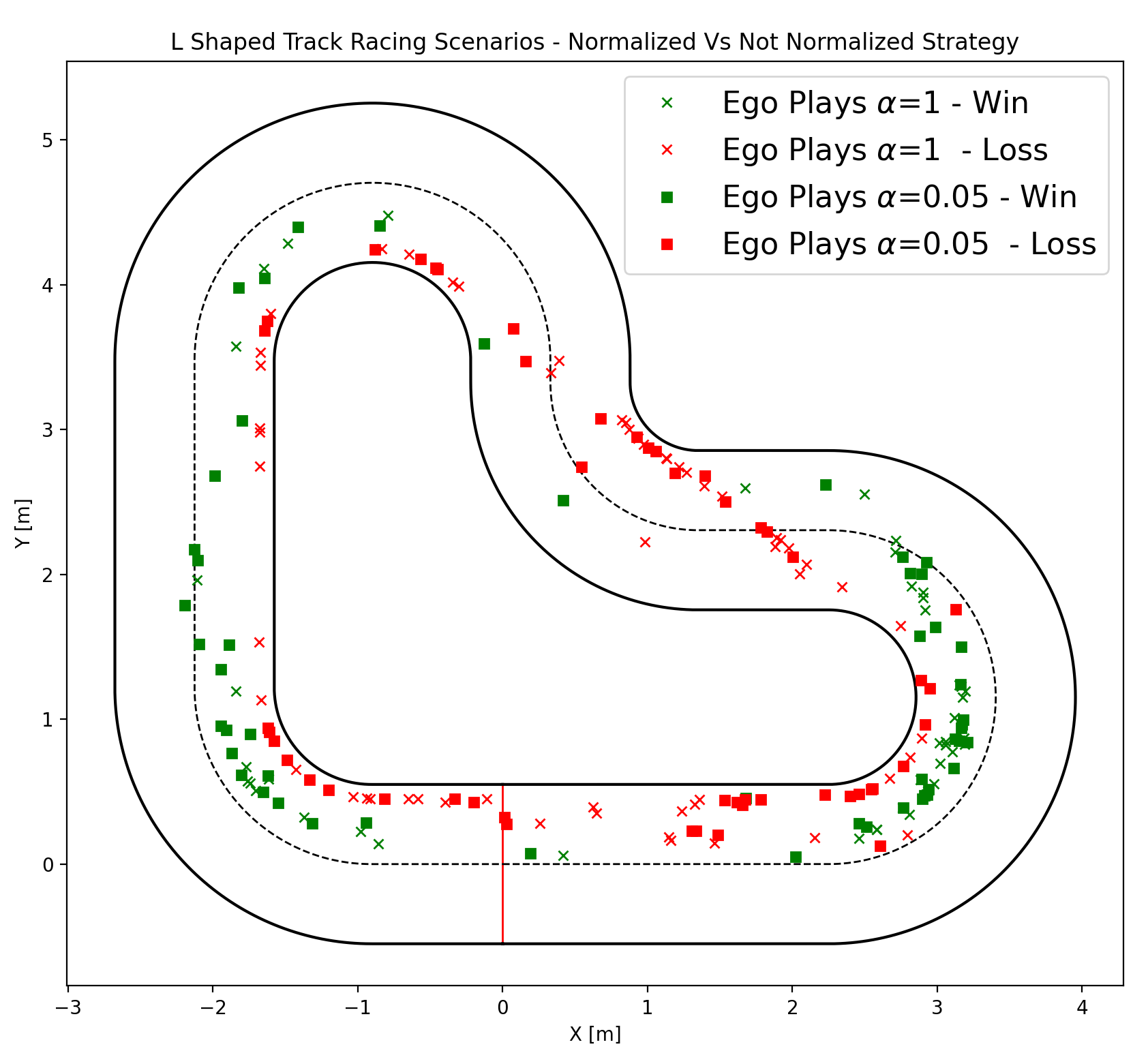}
    \caption{Results of 100 Monte Carlo simulations comparing two different strategies on an L-shaped track}
    \label{fig:MC_res}
\end{figure}

{\small
\begin{table}[h]
    \centering
    \begin{tabular}{|c|c|c|}
        \hline
         & Ego Plays Normalized & Ego Plays Non-Normalized \\ 
        \hline
        Win Percentage & 43\% & 54\% \\ 
        \hline
    \end{tabular}
    \caption{MC simulation results summary comparing both strategies}
    \label{tab:results}
\end{table}
}
The ego has a top speed of $3[m/s]$ while the opponent has a top speed of $2.85[m/s]$, otherwise both cars have the same dynamic model and constraints. The initial state of the cars is randomized with properties presented in the following table:
\begin{table}[h]
    \centering
    \begin{tabular}{|c|c|}
        \hline
        \textbf{Description}  & \textbf{Range} \\ 
        \hline
        Opponent initial position  & $[0, L]$ \\ 
        \hline
        Ego initial position relative to opponent  & $[-1.75, -1.5]$ \\ 
        \hline
        Opponent initial velocity & $[1.0, 2.0]$ m/s \\ 
        \hline
        Ego initial velocity relative to opponent & $[0.25, 0.75]$ m/s \\ 
        \hline
        Ego lateral offset & $[-H/3, H/3]$ m \\ 
        \hline
        Opponent lateral offset (relative to ego) & $[- H/8,H/8]$ m \\ 
        \hline
    \end{tabular}
    \caption{Randomized Parameters in the Monte Carlo Simulation. $L$ is the track length and $H$ is the track half width.}
    \label{tab:random_params}
\end{table}

Table \ref{tab:results} summarizes the results of this study. When both cars use the normalized strategy (exhibiting similar levels of aggressiveness), the ego car overtakes the opponent in $43\%$ of the scenarios. However, when the ego car adopts a more aggressive strategy than the opponent, it wins $54\%$ of the scenarios—an $11\%$ \emph{increase} in win probability. Note that the advantage of the non-normalized strategy may be even greater, considering that certain randomized initial conditions may inherently prevent the ego car from winning, regardless of its strategy.
. 
Another visualization of these results is available in Figure \ref{fig:MC_res}. Each point on the track represents the initial condition of the ego car in a single racing scenario. A green scenario indicates a win for the ego car (successful overtaking of the opponent), while a red scenario indicates a loss (failure to overtake the opponent). This shows the effectiveness of the non-normalized strategy relative to the normalized one.

Overall, these results clearly demonstrate that a more aggressive strategy increases the likelihood of winning. While this conclusion is intuitive, proving it rigorously is nontrivial. The non-normalized solutions presented in this paper provide a systematic method for controlling the relative aggressiveness of players in a game and analyzing its impact on the final outcome.

Also, while the increase in the win percentage from 43\% to 54\% may not seem substantial, the theoretical upper bound on the optimal win rate in this setting is difficult to compute. We conjecture that, given the constraints of this scenario, the optimal win percentage when both players pursue non-normalized solutions is likely not significantly higher than 54\%.

Clearly, in real-time applications, an identification of the opponent's aggressiveness needs to be carried out and adjusted dynamically to optimize individual performance while avoiding crashes. Understanding the effects of time-varying opponent aggressiveness is an interesting direction for future research but is beyond the scope of this paper.

\section{Conclusion} \label{sec:conclusion}
This paper introduces a novel method for extending existing approaches used to solve dynamic games (DG) through Mixed Complementarity Problem (MCP) tools. This traditional method has been restricted to computing the normalized solution. The proposed framework allows for the computation of non-normalized solutions, significantly expanding the solution space in DGs while using the same numerical tools. By leveraging bi-level optimization, the method enables solution selection based on specific objectives, such as minimizing the aggregate cost, optimizing the performance of individual players, or prioritizing game-specific properties. Furthermore, it was shown that by interpreting the different GNE solution by the level of relative aggressiveness of each player, a player can get an advantage by playing more aggressively. The examples demonstrate the effectiveness of the proposed framework in practical applications, highlighting its ability to identify diverse GNE solutions and to tailor these solutions to specific objectives. This approach addresses the limitations of normalized solutions by accounting for terms that are neglected in the standard solution. In conclusion, this work expands the computational toolbox for DGs by enabling exploration beyond normalized solutions and by offering a structured methodology for equilibrium selection. These contributions pave the way for more robust and flexible applications of game-theoretic principles in complex, real-world scenarios.


\bibliographystyle{IEEEtran}
\bibliography{IEEEfull,main}

\begin{IEEEbiography}[{\includegraphics[width=1.1in,height=1.4in,clip,keepaspectratio]{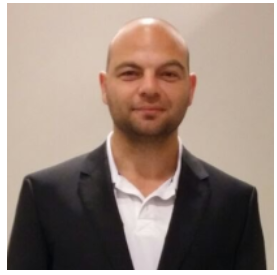}}]{Mark Pustilnik}
Mark Pustilnik received his B.S.c in 2007 and M.Sc. in 2013 in Aerospace Engineering from the Technion – Israel Institute of Technology. 
He is currently a Ph.D. candidate in Mechanical Engineering at the University of California, Berkeley. 
His research focuses on dynamic games, generalized Nash equilibria, optimal control, and trajectory optimization.
\end{IEEEbiography}

\begin{IEEEbiography}[{\includegraphics[width=1.1in,height=1.4in,clip,keepaspectratio]{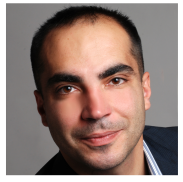}}]{Prof. Francesco Borrelli}
Francesco Borrelli received the Laurea degree in
computer science engineering from the University of
Naples Federico II, Naples, Italy, in 1998, and the
Ph.D. degree from ETH-Zurich, Zurich, Switzerland,
in 2002.
He is currently an Associate Professor with the
Department of Mechanical Engineering, University
of California, Berkeley, CA, USA. He is the author
of more than 100 publications in the field of pre-
dictive control and author of the book Constrained
Optimal Control of Linear and Hybrid Systems
(Springer-Verlag). His research interests include constrained optimal control,
model predictive control and its application to advanced automotive control
and energy efficient building operation.
Dr. Borrelli received the 2009 National Science Foundation CAREER
Award and the 2012 IEEE Control System Technology Award. In 2008, he
became the Chair of the IEEE Technical Committee on Automotive Control.
\end{IEEEbiography}

\end{document}